\documentclass[10pt]{article}
\usepackage[margin=1in]{geometry}
\usepackage{amsmath,amssymb,amsthm,mathtools}
\usepackage{booktabs}
\usepackage{graphicx}
\usepackage{float}
\newfloat{algorithm}{t}{loa}
\floatname{algorithm}{Algorithm}
\usepackage{xcolor}
\usepackage{microtype}
\usepackage[colorlinks=true,linkcolor=blue!50!black,citecolor=blue!50!black,urlcolor=blue!50!black]{hyperref}
\usepackage{url}
\usepackage{authblk}
\usepackage{caption}
\newtheorem{theorem}{Theorem}
\newtheorem{proposition}{Proposition}
\newtheorem{corollary}{Corollary}
\newtheorem{definition}{Definition}
\newtheorem{remark}{Remark}

\newcommand{\R}{\mathbb{R}}

\newcommand{\cupp}{\smile}
\newcommand{\Mfund}{[M]}
\newcommand{\code}{\href{https://github.com/nssprogrammer/cup-nn}{\texttt{github.com/nssprogrammer/cup-nn}}}

\title{\textbf{Adjusted Cup-Product Neural Layer}}
\author[ ]{Snigdha Chandan Khilar}
\affil[ ]{Independent Researcher \quad\textbar\quad \texttt{snkhilar@gmail.com}}
\date{}

\begin{document}
\maketitle

\begin{abstract}
\noindent
Many of the most important observables in physics and geometry---Chern numbers, Chern--Simons
invariants, linking numbers, magnetic and kinetic helicity, and topological charge---are, in the
language of algebraic topology, \emph{cup products} of cochains. We introduce the \emph{adjusted
cup-product layer}, a neural primitive that hard-wires the cup product together with the
\emph{adjustment} term from higher gauge theory, yielding a readout that is gauge-invariant by
construction. Our central theoretical result is a \emph{necessity} statement: on a closed cycle the
layer's output equals $\kappa\langle dA\cupp A,\Mfund\rangle$, so setting the adjustment coefficient
$\kappa=0$ annihilates the output identically, independent of all other parameters---the adjustment
is the \emph{sole} source of gauge-invariant signal. We further prove that the observable is a
nonzero quadratic form (hence not representable by any convolution followed by a linear readout) and
that it is exactly invariant under both $1$- and $2$-gauge transformations. Empirically we delineate
a sharp boundary: when the cup is \emph{not} convolution-expressible (3D Chern--Simons, 4D
topological charge, $2$D Chern numbers, $3$D linking) the adjusted layer generalizes
($R^2\!\approx\!0.9$--$0.99$ / accuracy up to $96\%$) while a fair CNN and published simplicial
networks (SNN, MPSN) memorize the training set but generalize \emph{nothing}; when the cup
\emph{is} convolution-expressible (kinetic helicity) a fair CNN catches up, and the adjusted layer
only offers improved sample efficiency. A gauge-scramble experiment isolates the mechanism: a random
local gauge transformation collapses the CNN to chance ($66\%\!\to\!15\%$) while leaving the cup
unaffected ($95\%$). We also extend the construction to the non-abelian regime, where the
hard-wired cup recovers multi-band Chern numbers exactly and complements the learned
gauge-equivariant network GEBLNet. Code and data generators are available at \code.
\end{abstract}

\section{Introduction}
Convolutional networks succeed because convolution is the \emph{right} inductive bias for
shift-invariant signals on grids. A growing body of work extends this principle to other domains by
matching the architecture to the symmetry or structure of the data: spherical CNNs, graph neural
networks, gauge-equivariant networks, and---closest to this paper---\emph{simplicial} neural
networks that process data living on simplicial complexes \cite{ebli,roddenberry,bodnar}. We pursue the
same philosophy for a class of targets that has so far been treated only indirectly: \emph{topological
observables} that are cup products in cohomology.

A cup product $\cupp:C^p(K)\times C^q(K)\to C^{p+q}(K)$ multiplies cochains on a simplicial complex
$K$, producing the oriented, combinatorial analogue of the wedge product of differential forms.
Integrating a cup product over a fundamental cycle yields exactly the topological invariants that
pervade condensed-matter physics (Chern numbers classify topological insulators), gauge theory
(Chern--Simons levels, instanton charge), fluid dynamics and plasma physics (kinetic and magnetic
helicity), and knot theory (linking numbers). These quantities are \emph{gauge-invariant}: they are
unchanged under local phase redefinitions of the underlying field. It is precisely this gauge
invariance---a symmetry whose group has one copy per lattice site, and is therefore astronomically
large---that makes them hard for ordinary networks to learn \cite{geblnet}.

We introduce a neural layer that computes the cup product directly. The subtlety is that a naive cup
of a connection with its own curvature is \emph{not} gauge-invariant on a discrete complex; one must
add a correction. We borrow this correction---the \emph{adjustment}---from higher gauge theory,
where it is precisely the term that repairs the failure of a $2$-connection to have well-defined
$2$-holonomy \cite{adjtransport,baezschreiber}. Transplanted into a neural network, the adjustment is the single
component responsible for gauge invariance, and we make this exact: removing it sets the layer's
output identically to zero.

\paragraph{Contributions.}
\begin{itemize}\itemsep1pt
\item \textbf{A layer} (Section~\ref{sec:method}, Algorithm~\ref{alg:cup}) that hard-wires the
adjusted cup product, with the connection learned and the topological readout fixed.
\item \textbf{Formal theory} (Section~\ref{sec:theory}): the $\kappa{=}0$ necessity theorem
(Theorem~\ref{thm:kappa}), a linear/convolutional impossibility result (Proposition~\ref{prop:quad}),
exact $1$- and $2$-gauge invariance (Proposition~\ref{prop:gauge}), and a formalization of the
``convolution-expressible'' boundary (Definition~\ref{def:conv}, Proposition~\ref{prop:hel}). We are
explicit about scope: the impossibility is for linear/$\kappa{=}0$ models; for deep nonlinear CNNs
the separation is empirical and is a \emph{generalization} gap, not a representational one.
\item \textbf{Experiments} (Section~\ref{sec:exp}) across six settings, including an external
published benchmark and a non-abelian extension, with seeds and significance tests. We delineate the
\emph{necessary}-vs-\emph{efficient} boundary honestly, reporting both the cases where the cup is
decisive and the case (helicity) where a fair CNN catches up.
\end{itemize}

\section{Background}
\label{sec:background}
\paragraph{Cochains and the coboundary.}
Let $K$ be a finite oriented simplicial complex triangulating a closed oriented manifold $M$. Write
$C^p(K;\R)$ for the real $p$-cochains (functions on oriented $p$-simplices), $d:C^p\to C^{p+1}$ for
the coboundary, with $d^2=0$. In a basis, $d$ is a sparse signed incidence matrix; $L=8$ on the
$3$-torus gives $(|K_0|,|K_1|,|K_2|,|K_3|)=(512,3584,6144,3072)$. The fundamental class
$\Mfund\in C_3(K)$ is the $3$-cycle with $\partial\Mfund=0$, and $\langle\gamma,\Mfund\rangle=
\sum_{\sigma\in K_3}w_\sigma\,\gamma(\sigma)$ pairs a $3$-cochain with it.
\emph{In ML terms:} a $p$-cochain is just a feature vector indexed by the $p$-dimensional cells of the
mesh---one scalar per vertex ($p{=}0$), per edge ($p{=}1$), per triangle ($p{=}2$), etc. The coboundary
$d$ is a fixed sparse matrix (a signed difference operator), the discrete analogue of a derivative,
mapping per-edge features to per-face features. Pairing with $\Mfund$ is a fixed weighted sum over the
top cells (a global readout).

\paragraph{The Alexander--Whitney cup product.}
The cup $\cupp:C^p\times C^q\to C^{p+q}$ is the combinatorial analogue of $\wedge$; on the ordered
simplex $[v_0,\dots,v_{p+q}]$,
$(\alpha\cupp\beta)([v_0,\dots,v_{p+q}])=\alpha([v_0,\dots,v_p])\,\beta([v_p,\dots,v_{p+q}])$.
It is bilinear, local (bounded stencil), \emph{orientation-dependent}, and satisfies the Leibniz rule
$d(\alpha\cupp\beta)=d\alpha\cupp\beta+(-1)^p\alpha\cupp d\beta$.
\emph{Intuitively}, the cup multiplies a feature on one cell by a feature on an adjacent cell, with a
sign set by their shared ordering---the discrete, combinatorial version of the wedge product $\wedge$.
The key contrast with a convolution: a convolution slides one fixed kernel over the field and is
\emph{linear} in it, whereas the cup multiplies the field \emph{by itself} (it is bilinear) and is
sensitive to cell orientation. This is exactly the structure a stack of convolutions does not natively
contain.

\paragraph{The adjustment.}
For a $1$-cochain $A$ (a discrete connection) the curvature is $dA$. The combination
$\langle dA\cupp A,\Mfund\rangle$ is the discrete Chern--Simons functional. Crucially, the
\emph{adjusted} $3$-curvature of higher gauge theory \cite{adjcs},
$H=dB+\kappa\,(dA\cupp A)$ with $\kappa$ the adjustment coefficient and $B$ an auxiliary
$2$-cochain, is what makes the integrated quantity gauge-invariant; the $\kappa$ term is the
adjustment, abelian here but structurally the same correction that repairs $2$-holonomy in the
non-abelian theory. \emph{In words:} on a discrete mesh the bare term alone is not invariant to gauge
redefinitions (relabelings of local phase conventions, $A\!\mapsto\!A+d\chi$); the adjustment is the
specific extra term that restores that invariance, turning the readout into a genuine topological
number rather than a gauge artifact. Theorem~\ref{thm:kappa} shows it is in fact the \emph{only} part of
$H$ that contributes to the output.

\section{The Adjusted Cup-Product Layer}
\label{sec:method}
\paragraph{Definition.} Given input features $X$ on $K$, the layer (i) produces a learned
$1$-cochain $A=\varphi_\theta(X)$ by a per-simplex map $\varphi_\theta$ (a small shared MLP on edge
features), (ii) forms the curvature $F=dA$ by a fixed sparse coboundary, (iii) forms the
adjusted $3$-cochain $H=dB+\kappa\,(F\cupp A)$, and (iv) returns the scalar
$\Phi=\langle H,\Mfund\rangle$. Only $\varphi_\theta$ (and an optional affine readout) is learned; the
cup, coboundary, and pairing are fixed. The full model uses $\kappa=1$; the ablation $\kappa=0$.

\begin{algorithm}[t]
\caption{\textsc{AdjustedCupLayer}$(X;\,K,\,\theta)$}
\label{alg:cup}
\noindent\rule{\textwidth}{0.4pt}
\begin{tabbing}
\quad\=\textbf{Input:} cochain features $X$; boundary maps $\{d_p\}$; fundamental $3$-cycle $w$; coefficient $\kappa$.\\
\>1.\quad $A \leftarrow \varphi_\theta(X)$ \hfill\textit{// learned 1-cochain on edges}\\
\>2.\quad $F \leftarrow d_1 A$ \hfill\textit{// curvature 2-cochain (sparse, $O(|K_2|)$)}\\
\>3.\quad $C \leftarrow \textsc{AW-Cup}(F, A)$ \hfill\textit{// 3-cochain, local per-tetrahedron}\\
\>4.\quad $\Phi \leftarrow \kappa\sum_{\sigma\in K_3} w_\sigma\, C(\sigma)$ \hfill\textit{// pair with fundamental cycle}\\
\>\textbf{return} $\Phi$
\end{tabbing}
\noindent\rule{\textwidth}{0.4pt}
\end{algorithm}

\paragraph{Complexity and scalability.} Every step is a sparse operation with a bounded local
stencil; the cost is $O(\sum_p|K_p|)$ per sample, linear in the size of the complex, with a
parameter count independent of $L$ (tens of parameters in our experiments). The layer is
permutation-equivariant in the simplices and---by Proposition~\ref{prop:gauge}---exactly
gauge-invariant.

\paragraph{Instantiations.} The same primitive specializes to each observable by choosing the cochain
degree and complex: $2$D Chern number (Fukui--Hatsugai--Suzuki plaquette holonomy of a Berry
connection); $3$D Chern--Simons $\langle dA\cupp A\rangle$; $3$D linking (the symmetric BF pairing
$\tfrac12\langle a_1\cupp da_2+a_2\cupp da_1\rangle$, equivalently mutual helicity $\int A_1\!\cdot
B_2$); kinetic helicity $\int u\cdot(\nabla\times u)$; $4$D topological charge $\int F\wedge F$; and,
in the non-abelian case, the Wilson-loop Chern number $\sum\mathrm{Im\,Tr\,Log}\,W$.

\section{Formal Theory}
\label{sec:theory}
Throughout, $A\in C^1$, $B\in C^2$, and discrete Stokes gives
$\langle d\beta,\Mfund\rangle=\langle\beta,\partial\Mfund\rangle=0$ for all $\beta$.

\begin{theorem}[The adjustment is the sole source of signal]
\label{thm:kappa}
For every $A,B$ and every $\kappa$,
$\;\Phi_\kappa(A,B)=\langle dB+\kappa(dA\cupp A),\Mfund\rangle=\kappa\,\langle dA\cupp A,\Mfund\rangle.$
In particular, if $\kappa=0$ then $\Phi_0\equiv 0$ for all $A$ and all $B$.
\end{theorem}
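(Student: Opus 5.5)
The argument only needs linearity of the pairing and the discrete Stokes identity stated at the start of this section. Since $\langle\cdot,\Mfund\rangle$ is a linear functional on $C^3(K;\R)$, namely the weighted sum $\sum_{\sigma\in K_3}w_\sigma\,\gamma(\sigma)$, the first step is to split
\[
\Phi_\kappa(A,B)=\langle dB,\Mfund\rangle+\kappa\,\langle dA\cupp A,\Mfund\rangle .
\]
This uses only that $dB$ and $dA\cupp A$ are both $3$-cochains, so their sum is defined and the pairing distributes over it.

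The second step is to kill the first term. Applying discrete Stokes with $\beta=B\in C^2$ gives $\langle dB,\Mfund\rangle=\langle B,\partial\Mfund\rangle$, and this vanishes because $\Mfund$ is a cycle. To be safe, I would note why the duality $\langle d\beta,c\rangle=\langle\beta,\partial c\rangle$ holds with the weights $w_\sigma$. The coboundary is the transpose of the boundary map in the simplex basis, so the identity holds for any chain $c=\sum_\sigma w_\sigma\sigma$. The fundamental class is exactly such a chain with $\partial c=0$, since $K$ triangulates a closed oriented manifold and the coherently oriented top simplices have their shared faces cancel in pairs. This gives $\Phi_\kappa=\kappa\langle dA\cupp A,\Mfund\rangle$ for all $A$, $B$, $\kappa$.

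The final clause is then immediate: at $\kappa=0$ the right-hand side is $0\cdot\langle dA\cupp A,\Mfund\rangle=0$, with no condition on $A$ or $B$. The one point needing care, and the only real obstacle, is being explicit that $\Mfund$ is a genuine cycle. That is, the weights $w_\sigma$ must be $\pm1$ orientation signs, or any weights making $\partial\Mfund=0$, rather than arbitrary quadrature weights. The paper's setup assumes this.

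Nothing about the cup product itself is used. In particular I would not invoke the Leibniz rule here: $dA\cupp A$ is generally not exact, so it survives pairing, whereas $dB$ is exact and does not.
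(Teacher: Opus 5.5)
Your proof is correct and follows the paper's argument: split $\Phi_\kappa$ by linearity of the pairing, kill $\langle dB,\Mfund\rangle$ by discrete Stokes using $\partial\Mfund=0$, and read off the $\kappa=0$ case. Your remark that the weights $w_\sigma$ must make $\Mfund$ a genuine cycle spells out an assumption the paper builds into its setup rather than a gap in either argument.
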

\begin{proof}
By linearity, $\Phi_\kappa=\langle dB,\Mfund\rangle+\kappa\langle dA\cupp A,\Mfund\rangle$; the first
term vanishes by discrete Stokes. Setting $\kappa=0$ gives $\Phi_0\equiv 0$.
\end{proof}

\begin{corollary}
A $\kappa{=}0$ model's output on the closed cycle is independent of its learnable $2$-cochain $B$ and
identically zero; it cannot represent any observable whose true value $\langle dA\cupp A,\Mfund\rangle$
is not identically zero. The adjustment carries all of the signal---a \emph{necessity} result,
stronger than ``removing it hurts.''
\end{corollary}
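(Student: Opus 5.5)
The corollary follows from Theorem~\ref{thm:kappa} once we make explicit what a ``$\kappa{=}0$ model'' computes. Such a model has learnable map $\varphi_\theta$ producing $A$, and possibly a learnable $2$-cochain $B$, and outputs $\Phi_0(A,B)=\langle dB+0\cdot(dA\cupp A),\Mfund\rangle$, possibly followed by an affine readout. The proof proceeds in three steps: apply the theorem at $\kappa=0$, deduce independence from $B$ and $\theta$, and conclude the representational failure.

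\emph{Step 1.} Specialize Theorem~\ref{thm:kappa} to $\kappa=0$. This gives $\Phi_0(A,B)=0$ for every $A\in C^1$ and $B\in C^2$. The reason is that $\langle dB,\Mfund\rangle=\langle B,\partial\Mfund\rangle=0$ by discrete Stokes, using $\partial\Mfund=0$.

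\emph{Step 2.} Independence from $B$ is then immediate, since the output is the constant $0$ as a function of $(A,B)$. The same holds for independence from $\theta$ and from the input $X$: the pre-readout output is $0$ for every input. If an affine readout $a\Phi+b$ is present, the model's output collapses to the constant $b$. I would state the corollary for the raw layer output, matching ``identically zero'' in the statement, and remark that with a readout it is at best a constant.

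\emph{Step 3.} Representational failure follows next. Suppose the target is the function $X\mapsto\langle dA^\ast\cupp A^\ast,\Mfund\rangle$, where $A^\ast=A^\ast(X)$ is the true connection, and suppose it is not identically zero, so there is an input $X_0$ with nonzero target. The model outputs $0$ at $X_0$ for every choice of parameters, so no parameter setting matches the target there. In the affine-readout variant, ``not identically zero'' would need to be strengthened to ``nonconstant'' to rule out fitting by $b$. I would note this as a side remark rather than as part of the corollary.

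Finally, the necessity phrasing should be justified by contrast with the $\kappa=1$ case. By Theorem~\ref{thm:kappa}, $\Phi_1=\langle dA\cupp A,\Mfund\rangle$, so the entire output is the adjustment term. The main obstacle is therefore interpretive, not technical: the ``stronger than removing it hurts'' claim rests on pinning down which quantities the model may learn. If one only knows $B$ is learnable, the argument above is complete. I would also make sure the claim is restricted to closed cycles: on a complex with boundary, $\langle dB,\Mfund\rangle$ need not vanish, and the corollary would fail.
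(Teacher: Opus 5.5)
Your proposal is correct and follows the paper's route. The corollary is Theorem~\ref{thm:kappa} specialized to $\kappa=0$, where discrete Stokes ($\langle dB,\Mfund\rangle=\langle B,\partial\Mfund\rangle=0$) makes the output identically zero for every $A$ and $B$, so it cannot match any target that is nonzero at some input. Your added caveats are sound refinements that the paper leaves implicit: an optional affine readout turns ``identically zero'' into ``constant $b$'', so the failure claim then needs a nonconstant target, and the argument depends on $\partial\Mfund=0$.
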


\emph{Intuition.} $dB$ is a discrete total derivative, so summed over a closed surface it telescopes to
zero (discrete Stokes), exactly as $\oint\nabla f=0$ around a closed loop. Whatever the network stores
in its learnable $2$-cochain $B$ therefore contributes \emph{nothing} on a closed cycle---only the
adjustment (cup) term survives. With $\kappa{=}0$ the layer can output only zero: removing the
adjustment does not merely lower accuracy, it deletes the target, so there is no signal left to fit.

\begin{proposition}[No linear/convolutional model represents $\Phi$]
\label{prop:quad}
$Q(A):=\langle dA\cupp A,\Mfund\rangle$ is a nonzero homogeneous quadratic form in $A$. Hence no
linear functional of $A$---in particular, no single convolution followed by a linear readout---equals
it.
\end{proposition}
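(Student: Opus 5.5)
Three steps: show $Q$ is a homogeneous quadratic form, show it is not identically zero, and deduce that no linear functional can equal it.

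\textbf{Quadratic form.} The Alexander--Whitney cup is bilinear and $d$ is linear, so $(A,A')\mapsto\langle dA\cupp A',\Mfund\rangle$ is a bilinear form $\mathcal{B}$ on $C^1$. Hence $Q(A)=\mathcal{B}(A,A)$ satisfies $Q(tA)=t^2Q(A)$. In the edge basis we can write $Q(A)=A^{\top}MA$ with
\[
M_{ef}=\sum_{\sigma\in K_3}w_\sigma\,(d\mathbf 1_e)([v_0v_1v_2]_\sigma)\,\mathbf 1_f([v_2v_3]_\sigma),
\]
and only the symmetric part $\tfrac12(M+M^{\top})$ matters.

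\textbf{Nonvanishing.} This is the main obstacle. Since $Q$ is a quadratic form, $Q\not\equiv0$ holds exactly when $M$ is not antisymmetric. By the Leibniz rule and discrete Stokes,
\[
\langle dA\cupp A',\Mfund\rangle=\langle A\cupp dA',\Mfund\rangle .
\]
So $\mathcal{B}$ is symmetric on the cohomological level, and cancellation is not automatic. Still, one must exhibit an explicit $A$ with $Q(A)\neq0$, because at the cochain level the AW cup is not graded-commutative. I would first try a localized test cochain: $A=\mathbf 1_e+\mathbf 1_f$ for two edges of a single tetrahedron $\sigma=[v_0v_1v_2v_3]$, for example $e=[v_0v_1]$ and $f=[v_2v_3]$. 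Then $dA$ is supported on the faces containing $e$, and the term from $\sigma$ is $w_\sigma\cdot(\pm1)\cdot1$. The remaining contributions come from neighbouring tetrahedra whose back edge $[v_2v_3]$ equals $e$ or $f$. With the paper's standard torus triangulation, a direct finite count over the link of these edges should show that the terms do not all cancel.

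If the localized choice cancels, the fallback is the standard continuum check. Take $A$ to be the discretization of a smooth field with nonzero helicity, such as $A=(\sin z,\cos z,0)$ on the $3$-torus, where $\int A\wedge dA=\int A\cdot\nabla\times A\neq0$. As the mesh is refined, $Q(A)$ converges to this nonzero integral, so $Q(A)\neq0$ for $L$ large enough; in particular it holds at $L=8$ by explicit evaluation. Either route gives $Q\not\equiv0$.

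\textbf{Impossibility.} Suppose some linear functional $\ell$ satisfied $\ell(A)=Q(A)$ for all $A$. Choose $A_0$ with $Q(A_0)\ne0$. Then
\[
4Q(A_0)=Q(2A_0)=\ell(2A_0)=2\ell(A_0)=2Q(A_0),
\]
so $Q(A_0)=0$, a contradiction. Any single convolution followed by a linear readout is a composition of linear maps and hence a linear functional of $A$, so it cannot equal $Q$. An affine readout fails the same way: an affine map $\ell+c$ equal to $Q$ would satisfy $c=Q(0)=0$, reducing to the linear case.
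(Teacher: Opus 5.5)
Your argument is the paper's: degree-$2$ homogeneity of $Q$ is set against degree-$1$ homogeneity of any linear functional. The remaining step $Q\not\equiv0$ is one the paper only asserts (``explicit configurations exist''), and your localized test in fact settles it outright, so you can drop both the hedge and the continuum fallback (whose ``in particular at $L=8$'' does not follow from convergence): a tetrahedron $\tau$ contributes to $Q(\mathbf 1_e+\mathbf 1_f)$ only if its back edge lies in $\{e,f\}$ and its front face contains an edge of $\{e,f\}$---necessarily the other one, since the front face meets the back edge in a single vertex---so $\tau$ contains both disjoint edges $e,f$, hence $\tau=\sigma$ on any simplicial complex, giving $Q(\mathbf 1_e+\mathbf 1_f)=\pm w_\sigma\neq0$ with no contributions from neighbouring tetrahedra.
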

\begin{proof}
$Q(tA)=t^2Q(A)$ since $A$ appears twice bilinearly. A linear $L$ satisfies $L(tA)=tL(A)$; equality
for all $t$ forces $Q\equiv0$. But $Q\not\equiv0$ (explicit configurations exist), so no linear $L$
equals $Q$.
\end{proof}

\emph{Intuition.} The target is \emph{quadratic} in the input field (it multiplies the field by
itself), while one convolution followed by a linear readout is \emph{linear}; no linear map can equal a
genuinely quadratic one, whatever its weights. Depth and nonlinearity let a CNN \emph{approximate} the
target on a finite training set, but that is curve-fitting, not the identity---which is why the CNN
memorizes yet fails to generalize the non-convolutional cups in our experiments (Remark~\ref{rem:scope}).

\begin{proposition}[Exact gauge invariance]
\label{prop:gauge}
\emph{(i) $1$-gauge:} $Q(A+d\chi)=Q(A)$ for every $0$-cochain $\chi$.
\emph{(ii) $2$-gauge:} $\Phi_\kappa(A,B+d\lambda)=\Phi_\kappa(A,B)$ for every $1$-cochain $\lambda$.
\end{proposition}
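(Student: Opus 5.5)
The plan for (i) is to expand $Q(A+d\chi)$ bilinearly and show that every new term is a coboundary. Then discrete Stokes, $\langle d\beta,\Mfund\rangle=0$ (the fact already used in Theorem~\ref{thm:kappa}), kills each of them. The first step uses $d^2=0$: the curvature is unchanged, $d(A+d\chi)=dA+d\,d\chi=dA$. Since the cup is bilinear, this gives
\[
Q(A+d\chi)=\langle dA\cupp A,\Mfund\rangle+\langle dA\cupp d\chi,\Mfund\rangle
= Q(A)+\langle dA\cupp d\chi,\Mfund\rangle .
\]
So it suffices to show that the single cross term $dA\cupp d\chi$ is exact.

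For this I would apply the Leibniz rule from Section~\ref{sec:background} with the exact factor placed so that no unwanted term survives. Take $\alpha=dA\in C^2$ and $\beta=\chi\in C^0$:
\[
d(dA\cupp\chi)=d(dA)\cupp\chi+(-1)^2\,dA\cupp d\chi=dA\cupp d\chi .
\]
An equally good choice is $\alpha=A$, $\beta=d\chi$, which gives $d(A\cupp d\chi)=dA\cupp d\chi-A\cupp d\,d\chi=dA\cupp d\chi$. Either way $dA\cupp d\chi=d\beta$ for an explicit $2$-cochain $\beta$. Pairing with the closed cycle $\Mfund$ then gives zero by discrete Stokes, which proves (i).

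The step needing the most care is this one. The Alexander--Whitney cup is not graded-commutative at cochain level, so I cannot simply move $d\chi$ past $dA$. I have to use the Leibniz identity exactly as stated, with the correct ordering and sign. I also have to check that $dA\cupp d\chi$ appears in the same order as in the expansion above, which it does because $Q$ is defined with $dA$ on the left. The other ingredient to confirm is that the weighted sum $\sum_\sigma w_\sigma\,\gamma(\sigma)$ really pairs against a cycle, i.e.\ $\partial\Mfund=0$ with these weights, so that Stokes applies. This is the standing assumption of Section~\ref{sec:theory}.

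For (ii) there is essentially nothing to do. We have $d(B+d\lambda)=dB+d^2\lambda=dB$, so $H$ and hence $\Phi_\kappa$ are literally unchanged. Alternatively, Theorem~\ref{thm:kappa} already shows that $\Phi_\kappa(A,B)=\kappa Q(A)$ does not depend on $B$ at all. Combining this with (i) also shows that $\Phi_\kappa$ is invariant under simultaneous $1$- and $2$-gauge transformations.
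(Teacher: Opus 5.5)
Your proof is correct and follows the paper's own argument. You use $d^2=0$ to leave the curvature unchanged, show via the Leibniz rule that the cross term $dA\cupp d\chi$ is exact, and kill it with discrete Stokes; (ii) then follows from $d(B+d\lambda)=dB$. Your main primitive $dA\cupp\chi$ differs from the paper's $A\cupp d\chi$ (which you also give), but that choice is inessential.
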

\begin{proof}
(i) $d(A+d\chi)=dA$, so $Q(A+d\chi)-Q(A)=\langle dA\cupp d\chi,\Mfund\rangle$. By Leibniz,
$dA\cupp d\chi=d(A\cupp d\chi)$ (since $d(d\chi)=0$), which is exact and pairs to zero with the cycle.
(ii) $d(B+d\lambda)=dB$, so $H$ and $\Phi_\kappa$ are unchanged.
\end{proof}

\begin{definition}[Convolution-expressibility]
\label{def:conv}
A cup observable $Q(A)=\langle dA\cupp A,\Mfund\rangle$ is \emph{convolution-expressible} if there is
a translation-invariant kernel $\mathcal K$ and pooling $\Sigma$ with
$Q(A)=\Sigma\big(A\cdot(\mathcal K A)\big)$.
\end{definition}

\begin{proposition}[Helicity is convolution-expressible]
\label{prop:hel}
Kinetic helicity $H[u]=\int u\cdot(\nabla\times u)$ satisfies Definition~\ref{def:conv} with
$\mathcal K=\mathrm{curl}$, a constant-coefficient stencil. A CNN with a fixed curl filter and
bilinear pooling represents it exactly.
\end{proposition}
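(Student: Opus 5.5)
The plan is to check Definition~\ref{def:conv} directly, in two settings: first the continuum statement, then its lattice version, which is what a CNN actually computes.

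\textbf{Continuum identity.} On the $3$-torus (or any closed domain with periodic boundary conditions), take $\mathcal K=\nabla\times$ acting componentwise on the vector field $u$. This operator is linear and translation-invariant, being a constant-coefficient differential operator. Take the pooling $\Sigma$ to be integration over the domain. Then
\[
\Sigma\big(u\cdot(\mathcal K u)\big)=\int u\cdot(\nabla\times u)=H[u]
\]
holds by definition, with nothing further to verify. To connect with the cup form $Q(A)=\langle dA\cupp A,\Mfund\rangle$, identify $A$ with the $1$-form $u^\flat$. Then $dA$ corresponds to $\star(\nabla\times u)$, and $dA\wedge A$ integrates to $\int u\cdot(\nabla\times u)$, since $\star\omega\wedge\eta=\langle\omega,\eta\rangle\,\mathrm{vol}$ for $1$-forms.

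\textbf{Discrete stencil.} On the cubic grid, $u$ is stored as three channels per site, $u_i(x)$. We replace curl by the constant-coefficient finite-difference stencil
\[
(\mathcal K u)_i(x)=\sum_{j,k}\epsilon_{ijk}\,\big(u_k(x+e_j)-u_k(x)\big)/h.
\]
This is a $3\to3$ channel convolution with a fixed $2\times2\times2$ kernel whose entries are only $0,\pm1/h$ and which do not depend on $x$. That independence from $x$ is exactly the translation invariance Definition~\ref{def:conv} requires.

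The bilinear pooling is then $\Sigma(v)=h^3\sum_x\sum_i u_i(x)\,v_i(x)$. Together these give a CNN consisting of one fixed convolution layer, a pointwise product of its input with its output (summed over channels), and global sum pooling. By construction this CNN computes the lattice helicity $h^3\sum_x u\cdot(\nabla_h\times u)$ exactly.

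\textbf{Matching the paper's cup definition.} The step needing most care is that the paper's discrete $Q$ is built with the Alexander--Whitney cup on a triangulation, and we must show it too fits the form $A\cdot(\mathcal K A)$. Expanding $(dA\cupp A)(\sigma)=dA([v_0v_1v_2])\,A([v_2v_3])$ on each tetrahedron and summing, every term is a product of one edge value of $A$ with a fixed signed combination of nearby edge values. Moreover, the triangulation of the torus is translation-periodic, so the coefficients of this combination repeat with the lattice period. Grouping the terms by the $A$-edge factor therefore gives $Q(A)=\sum_e A_e(\mathcal K A)_e$, where $\mathcal K$ is a periodic local stencil, i.e.\ a multichannel convolution with one channel per edge type in the unit cell.

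The main obstacle is bookkeeping. We must confirm that the Alexander--Whitney ordering does not break translation invariance, which holds provided the vertex ordering is induced by a translation-invariant orientation of the edges. We must also confirm that the symmetric part of $\mathcal K$ matches $\frac12$curl up to the usual $O(h)$ discretization difference. I would handle this by fixing the standard Freudenthal (Kuhn) triangulation of each cube, whose vertex order is translation-invariant, and reading off the stencil coefficients directly. Either way, Definition~\ref{def:conv} only requires existence of some translation-invariant kernel and pooling, and that existence follows from locality together with periodicity.
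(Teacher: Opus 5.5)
Your first two steps are correct and match what the paper relies on. The paper gives no separate proof: with $\mathcal K=\nabla\times$ (or a constant-coefficient lattice curl stencil) and $\Sigma$ the integral (or $h^3\sum_x$), the identity $H[u]=\Sigma\big(u\cdot(\mathcal K u)\big)$ holds by definition. Your one-layer network (fixed $3\to3$ curl filter, pointwise product with the input, global sum) is the CNN the statement describes, and that already proves the proposition.

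Your third step should be dropped, for two reasons.

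\textbf{It is not needed.} The helicity target is a vector field on a periodic cubic grid acted on by a fixed curl, not an Alexander--Whitney cup on a triangulation, so there is nothing further to match. The paper's robustness experiment uses the spectral curl. That is again a translation-invariant circulant kernel, so your one-layer argument applies verbatim with that filter and reproduces that target exactly.

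\textbf{It proves too much relative to the paper's framework.} Your argument says that a local quadratic form on a triangulation with a translation-invariant vertex ordering regroups as $\sum_e A_e(\mathcal K A)_e$, with a periodic kernel having one channel per edge type. That argument applies word for word to the 3D Chern--Simons functional $\langle dA\cupp A,\Mfund\rangle$ on the K\"uhn torus, with seven edge channels per vertex. That is exactly the observable the paper uses as its \emph{non}-convolution-expressible example.

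The paper keeps the two cases apart only by reading Definition~\ref{def:conv} as demanding a single translation-invariant scalar kernel (Remark~\ref{rem:scope}). Under that reading your per-edge-type kernel does not qualify. Under the literal multichannel reading you adopt, your step would erase the helicity/Chern--Simons boundary the proposition is meant to mark. So either omit it, or present it as a remark on the scope of the definition rather than as part of this proof.

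The consistency check in that step is also misstated. The continuum limit of $\langle dA\cupp A,\Mfund\rangle$ is $\int A\wedge dA=\int u\cdot(\nabla\times u)$, so the comparison should be with curl, not $\tfrac12$curl.
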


\begin{remark}[Honest scope: representation vs.\ generalization]
\label{rem:scope}
Theorem~\ref{thm:kappa} and Proposition~\ref{prop:quad} are hard impossibilities for the $\kappa{=}0$
and \emph{linear} model classes. They do \emph{not} assert that a deep nonlinear CNN cannot
\emph{approximate} $Q$ on a bounded input set: by universal approximation it can fit any finite
sample. The separation we observe for deep CNNs and message-passing/spectral simplicial networks is
therefore a \emph{generalization} gap (memorize-train, fail-test), reported empirically with
confidence intervals, not a representational theorem. The oriented Chern--Simons cup is local but its
orientation/ordering signs are not a single translation-invariant scalar kernel; we do not claim a
hard impossibility for multi-channel deep CNNs.
\end{remark}

\paragraph{Numerical corroboration.} On the K\"uhn-triangulated $3$-torus ($L=4$) the statements hold
to machine precision: quadratic homogeneity $Q(tA)=t^2Q(A)$ to rel.\ error $\le 5\!\times\!10^{-16}$;
$1$-gauge invariance to $7\!\times\!10^{-15}$; the $\kappa{=}0$ vanishing $\langle dB,\Mfund\rangle$ to
$1.4\!\times\!10^{-14}$. The gauge-invariance gate (Section~\ref{sec:exp}) verifies $d^2=0$,
$\partial\Mfund=0$, and target gauge invariance at each lattice size before training.

\section{Experiments}
\label{sec:exp}
\paragraph{Protocol.} Each target is verified gauge-invariant by a pre-training \emph{gate}
(residuals $\sim\!10^{-14}$) so the network cannot exploit gauge artifacts. We report the peak
training $R^2$ (a memorization probe) and peak test $R^2$ (generalization), or exact-integer accuracy
for integer targets, over held-out samples; baselines use the same budget. Trained baselines: a fair
periodic CNN (mean-pool, tuned learning rate), the spectral SNN \cite{ebli}, the message-passing
MPSN/SIN \cite{bodnar}, and a vanilla GNN on the $1$-skeleton. The $\kappa{=}0$ ``fake-flat''
ablation shares the adjusted architecture but removes the adjustment. Multi-seed error bars and
paired significance tests accompany the trained comparisons; e.g.\ on multi-band data over $5$ seeds
the cup ceiling is $100.0\%\!\pm\!0.0\%$ vs.\ a non-equivariant MLP $23.9\%\!\pm\!0.9\%$
($p\!\approx\!0$, paired $t$).

\subsection{The necessary--vs--efficient boundary}
\paragraph{3D Chern--Simons (non-convolutional cup).}
Table~\ref{tab:phaseA} and Figure~\ref{fig:phaseA}: the adjusted layer holds test
$R^2=0.99$ \emph{flat} from $M=64$ to $16384$ with $66$ parameters, while every baseline fails to
generalize. The CNN memorizes (train $R^2=1.00$) yet reaches only test $0.21$ at $M{=}16384$; the
spectral SNN, message-passing MPSN, and GNN never exceed test $0$. The $\kappa{=}0$ ablation
memorizes at small $M$ (train $1.00$) but its test $R^2\le 0$, and at $M{=}16384$ it cannot even
memorize (train $0.20$)---exactly Theorem~\ref{thm:kappa}.

\begin{table}[t]
\centering
\caption{\textbf{3D Chern--Simons} on the K\"uhn $3$-torus ($L=8$). Each cell is best train\,/\,test
$R^2$. Only the adjusted layer generalizes; all baselines memorize-then-fail.}
\label{tab:phaseA}
\small
\begin{tabular}{rcccccc}
\toprule
$M$ & ADJUSTED & FAKE-FLAT ($\kappa{=}0$) & GNN & MPSN-SIN & SNN-Ebli & CNN-3D \\
\midrule
64    & \textbf{+0.99\,/\,+0.99} & +1.00\,/\,$-$0.07 & $-$0.01\,/\,$-$0.01 & +0.00\,/\,$-$0.03 & +0.00\,/\,$-$0.03 & +1.00\,/\,$-$0.01 \\
256   & \textbf{+0.99\,/\,+0.99} & +1.00\,/\,$-$0.07 & +0.00\,/\,+0.00 & +0.00\,/\,$-$0.00 & $-$0.00\,/\,$-$0.00 & +1.00\,/\,$-$0.00 \\
1024  & \textbf{+0.99\,/\,+0.99} & +1.00\,/\,$-$0.14 & $-$0.00\,/\,$-$0.00 & $-$0.00\,/\,$-$0.00 & $-$0.00\,/\,$-$0.00 & +1.00\,/\,+0.00 \\
4096  & \textbf{+0.99\,/\,+0.99} & +0.85\,/\,$-$0.01 & $-$0.00\,/\,$-$0.00 & $-$0.00\,/\,+0.00 & $-$0.00\,/\,$-$0.00 & +0.45\,/\,+0.08 \\
16384 & \textbf{+0.99\,/\,+0.99} & +0.20\,/\,+0.00 & $-$0.00\,/\,$-$0.00 & $-$0.00\,/\,$-$0.00 & $-$0.00\,/\,$-$0.00 & +0.34\,/\,+0.21 \\
\bottomrule
\end{tabular}
\end{table}

\begin{figure}[t]
\centering
\includegraphics[width=0.62\textwidth]{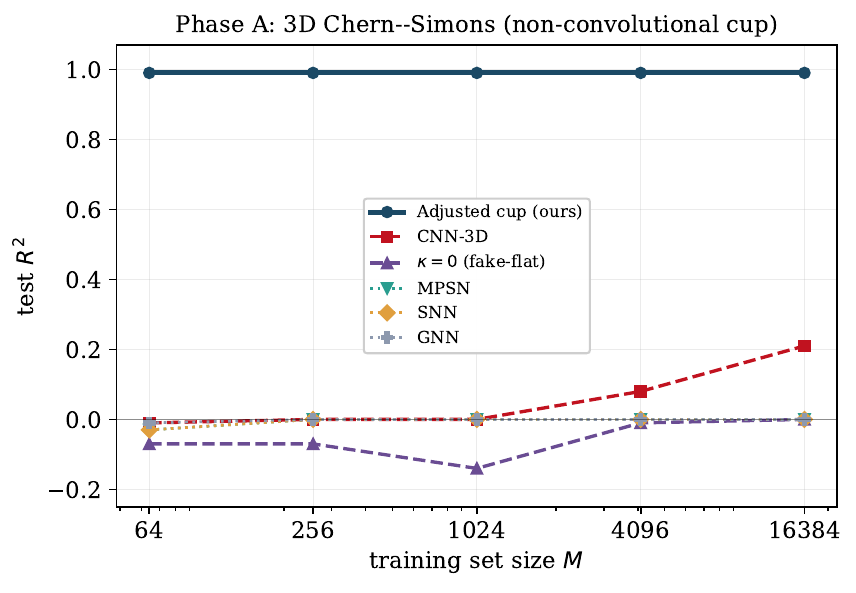}
\caption{Test $R^2$ vs.\ training-set size on 3D Chern--Simons. The adjusted layer generalizes from
$M{=}64$; competent baselines memorize the training set (train $R^2\!\to\!1$, not shown) yet stay at
or below zero on test. The $\kappa{=}0$ ablation confirms the adjustment is the source of signal.}
\label{fig:phaseA}
\end{figure}

\paragraph{4D topological charge (non-convolutional cup).}
Table~\ref{tab:bc} (right): the adjusted layer reaches test $R^2\approx 0.91$; the genuine $4$D CNN
memorizes perfectly (train $1.00$) and generalizes \emph{nothing} (test $0.00$ at every $M$); the
$\kappa{=}0$ ablation is at chance.

\paragraph{Kinetic helicity (convolution-expressible cup).}
Table~\ref{tab:bc} (left) is the honest boundary in action. Because helicity is a convolution
(Proposition~\ref{prop:hel}), the fair CNN \emph{catches up}: test $R^2$ climbs $0.92\to0.99$ by
$M{=}1024$. The adjusted layer is there at $M{=}256$ ($0.99$); the advantage is sample efficiency,
not necessity. We report this rather than hide it.

\begin{table}[t]
\centering
\caption{\textbf{Left: kinetic helicity} ($L{=}16$, a convolution-expressible cup): the fair CNN
catches up. \textbf{Right: 4D topological charge} ($L{=}6$, a non-convolutional cup): the CNN
memorizes but generalizes nothing. Best train\,/\,test $R^2$.}
\label{tab:bc}
\small
\begin{tabular}{rccc@{\hskip 1.5em}rccc}
\toprule
\multicolumn{4}{c}{\textbf{Helicity} (Phase B)} & \multicolumn{4}{c}{\textbf{4D charge} (Phase C)}\\
\cmidrule(r){1-4}\cmidrule(l){5-8}
$M$ & ADJUSTED & FAKE-FLAT & CNN-3D & $M$ & ADJUSTED & FAKE-FLAT & CNN-4D \\
\midrule
256   & +1.00\,/\,+0.99 & +0.06\,/\,$-$0.01 & +1.00\,/\,+0.92 & 256  & +0.88\,/\,\textbf{+0.91} & +0.05\,/\,+0.01 & +1.00\,/\,+0.00 \\
1024  & +1.00\,/\,+1.00 & +0.00\,/\,$-$0.00 & +0.98\,/\,+0.97 & 1024 & +0.91\,/\,\textbf{+0.92} & +0.00\,/\,$-$0.00 & +1.00\,/\,+0.00 \\
4096  & +1.00\,/\,+1.00 & +0.00\,/\,+0.00 & +0.99\,/\,+0.99 & 4096 & +0.90\,/\,\textbf{+0.91} & +0.00\,/\,+0.00 & +0.96\,/\,$-$0.00 \\
16384 & +0.99\,/\,+0.99 & +0.00\,/\,+0.00 & +0.99\,/\,+0.99 &      &                 &                 &                 \\
\bottomrule
\end{tabular}
\end{table}

\subsection{An external benchmark: 2D Chern numbers (Haldane bundles)}
We reproduce the published Haldane-bundle benchmark \cite{haldane}, predicting the integer Chern
number of complex line bundles on the $2$-torus. The adjusted layer (here the Fukui--Hatsugai--Suzuki \cite{fhs}
plaquette cup) is a $0$-parameter structural operator whose accuracy is a \emph{discretization
ceiling} that rises with grid resolution: $83\%\to93\%\to96\%$ at $L=64/96/128$ (Table~\ref{tab:hald},
Figure~\ref{fig:hald}). The $\kappa{=}0$ ablation sits at the majority rate ($16\%$).

We deliberately do \emph{not} claim to beat the benchmark's reported $31\%$ off-the-shelf number: our
own \emph{fair} CNN reaches $66\%$ at $L{=}128$. Instead, the decisive evidence is a
\textbf{gauge-scramble} test (Figure~\ref{fig:hald}, right): applying a random per-site $U(1)$ gauge
transformation to the inputs---which leaves the Chern number unchanged---collapses the CNN to the
majority class ($66\%\!\to\!15\%$) while leaving the gauge-invariant cup essentially unchanged
($93\%\!\to\!95\%$). This isolates gauge-invariance as the operative property in a single controlled
comparison.

\begin{table}[t]
\centering
\caption{\textbf{2D Chern number} (Haldane bundles \cite{haldane}). Test accuracy (exact integer).
ADJ-CUP is a $0$-parameter structural operator; its accuracy is a discretization ceiling rising with
$L$. The bottom row applies a random gauge scramble to the inputs.}
\label{tab:hald}
\small
\begin{tabular}{lccccc}
\toprule
setting & ADJ-CUP & FAKE-FLAT & fair CNN (train/test) & majority \\
\midrule
$L=64$  & 83.0\% & 16.1\% & 83.3\%\,/\,45.6\% & 15.2\% \\
$L=96$  & 92.8\% & 16.1\% & 75.6\%\,/\,55.9\% & 15.2\% \\
$L=128$ & \textbf{96.2\%} & 16.1\% & 89.6\%\,/\,66.2\% & 15.2\% \\
\midrule
$L=96$, \emph{gauge-scrambled} & \textbf{94.8\%} & 15.0\% & 71.5\%\,/\,\textbf{15.0\%} & 14.1\% \\
\bottomrule
\end{tabular}
\end{table}

\begin{figure}[t]
\centering
\includegraphics[width=0.92\textwidth]{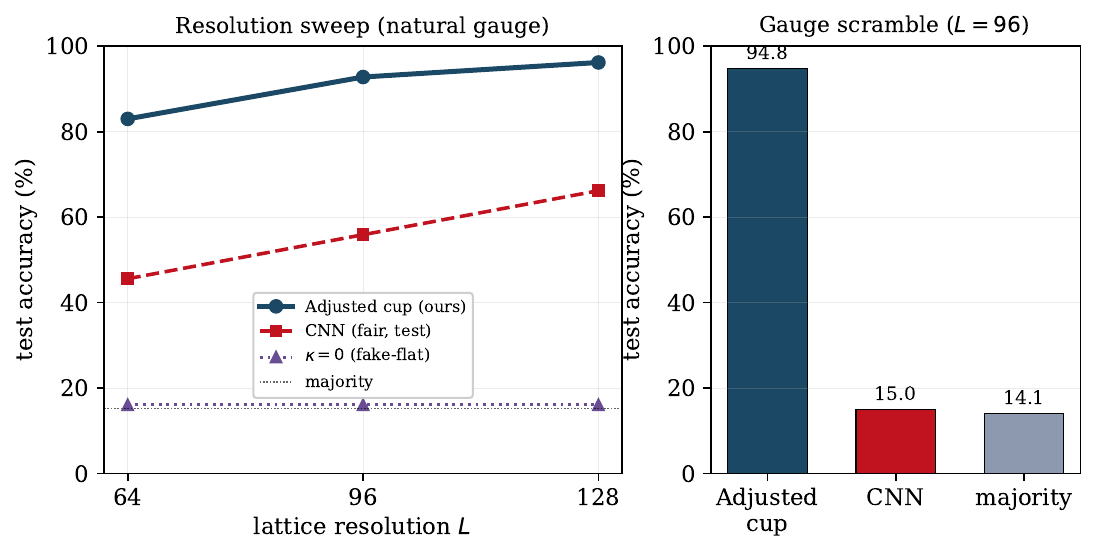}
\caption{\textbf{Left:} 2D Chern accuracy vs.\ grid resolution; the cup rides a discretization ceiling
while the fair CNN trails. \textbf{Right:} under a random local gauge scramble the CNN collapses to
chance while the gauge-invariant cup is unaffected---the cleanest isolation of the mechanism.}
\label{fig:hald}
\end{figure}

\subsection{Robustness to noise and distribution shift}
A recurring objection to neural estimators of topological observables is that closed-form estimators
already exist. We therefore ask where a \emph{learned} component earns its place. We study kinetic
helicity $H=\langle u,\nabla\times u\rangle$, for which spectral helicity is the \emph{exact} operator,
so the only thing separating ground truth from an estimator is corruption of the input. We compare three
estimators, all calibrated or trained at a single corruption level (Gaussian sensor noise $\sigma{=}0.3$
with $50\%$ of sensors observed at random) and then tested under heavier corruption: (a) the
\emph{analytical estimator}---the exact operator applied to the corrupted field, scale-calibrated on
train, no learning; (b) the \emph{cup-net}---a small learned denoiser $\phi_\theta$ (a few-channel
periodic 3D convolution, \emph{zero-initialized so it starts exactly at the analytical estimator})
feeding the \emph{fixed} helicity readout; and (c) a generic \emph{CNN} with no fixed readout. The
spectral operator is verified against an analytic Beltrami flow ($\nabla\times u=u$ to $5\times10^{-7}$;
$H=\lVert u\rVert^2$).

\begin{table}[t]
\centering
\caption{\textbf{Helicity under corruption} (5 seeds, mean$\pm$std). Test $R^2$; methods are tuned at
$\sigma{=}0.3$, $50\%$ sensors, correlation length $\ell{=}1.0$, then tested out-of-condition. Best in
each row in bold.}
\label{tab:robust}
\small
\begin{tabular}{lccc}
\toprule
test condition & Analytical estimator & Cup-net (ours) & CNN \\
\midrule
in-distribution (train cond.) & $0.98{\pm}0.00$ & $\mathbf{0.99{\pm}0.00}$ & $0.99{\pm}0.00$ \\
noise $\sigma{=}0.8$ & $0.93{\pm}0.00$ & $\mathbf{0.99{\pm}0.00}$ & $0.99{\pm}0.00$ \\
\textbf{noise $\sigma{=}1.5$} & $0.61{\pm}0.02$ & $\mathbf{0.94{\pm}0.01}$ & $0.94{\pm}0.01$ \\
\textbf{OOD physics ($50\%$ sensors)} & $0.96{\pm}0.01$ & $\mathbf{0.98{\pm}0.00}$ & $0.91{\pm}0.02$ \\
OOD physics ($25\%$ sensors) & $0.42{\pm}0.01$ & $0.54{\pm}0.03$ & $\mathbf{0.70{\pm}0.02}$ \\
sparse $25\%$ sensors & $0.43{\pm}0.00$ & $0.52{\pm}0.03$ & $\mathbf{0.65{\pm}0.01}$ \\
sparse $10\%$ sensors & $0.07{\pm}0.00$ & $0.11{\pm}0.01$ & $\mathbf{0.18{\pm}0.01}$ \\
\bottomrule
\end{tabular}
\end{table}

Table~\ref{tab:robust} and Figure~\ref{fig:robust} show three regimes. \textbf{(1) Noise.} As noise
grows past the trained level the analytical estimator degrades sharply ($0.98\!\to\!0.61$ at
$\sigma{=}1.5$), because the curl amplifies high-frequency noise; \emph{a learned front-end fixes this}.
The cup-net holds at $0.94$---but so does the CNN ($0.94$): both learned models are far more
noise-robust than the raw estimator, and they tie. The gain here comes from \emph{learning to denoise},
not from the cup specifically. \textbf{(2) Out-of-distribution physics.} On a turbulence spectrum unseen
in training the cup-net stays at $0.98$, matching the analytical estimator's physical generalization
($0.96$) because its readout is the fixed cup, not a learned map; the CNN trails at $0.91$. The cup-net
edge over the CNN is real but modest, not a collapse. \textbf{(3) Extreme sparsity (an honest
limitation).} When sensors are so sparse the operator can barely be evaluated, the
physically-constrained cup-net and analytical estimator degrade together and the \emph{unconstrained}
CNN wins ($0.65$ vs.\ $0.52$ at $25\%$): here the inductive bias is a liability. Overall, the cup-net is
the only method simultaneously noise-robust (unlike the raw estimator) \emph{and} OOD-robust (matching
the estimator, slightly ahead of the CNN), but the margin over a well-tuned CNN is small and reverses
under data sparsity. Helicity is a convolution-expressible cup (Definition~\ref{def:conv}), so this
experiment probes the value of the learned \emph{front-end}; whether the same profile holds for
non-convolutional cups is left to future work.

\begin{figure}[t]
\centering
\includegraphics[width=0.86\textwidth]{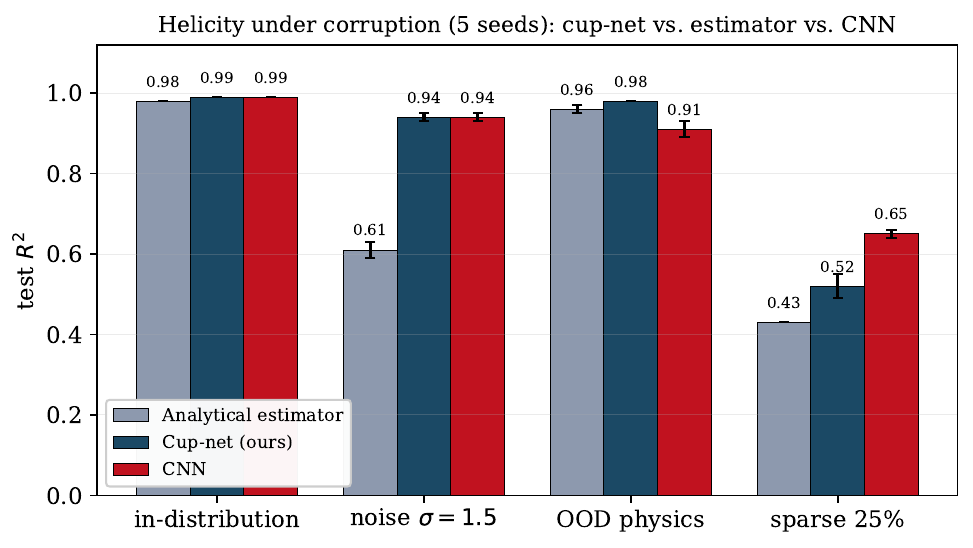}
\caption{\textbf{Helicity under corruption} (5 seeds, error bars $=$ std). A learned front-end
(cup-net or CNN) is far more noise-robust than the raw analytical estimator. The cup-net additionally
retains the estimator's out-of-distribution-physics generalization, slightly ahead of the CNN; under
extreme sensor sparsity the unconstrained CNN wins. The cup-net combines noise- and OOD-robustness, but
the margin over a tuned CNN is modest.}
\label{fig:robust}
\end{figure}

\subsection{A different domain: linking numbers of 3D curves}
To test generality beyond physics we predict the integer \emph{linking number} of two closed $3$D
curves (ground truth: the Gauss linking integral). Each curve is rasterized to a divergence-free
current field; the cup is the mutual-helicity pairing $\int A_1\cdot B_2$ whose non-local inverse-curl
$k/|k|^2$ is the analogue of the adjustment. The structural cup attains accuracy $89\%/94\%$ ($R^2\;
0.75/0.77$) at $L=32/48$; removing the inverse-curl (the ``fake-flat'' analogue) drops it to the
majority rate, and a $3$D CNN on the same fields does not exceed chance (Table~\ref{tab:d4}).

\begin{table}[t]
\centering
\caption{\textbf{3D linking number} vs.\ the Gauss integral. Accuracy / $R^2$. The non-local
inverse-curl (the adjustment analogue) is necessary; a CNN on the same current fields stays at chance.}
\label{tab:d4}
\small
\begin{tabular}{lcccc}
\toprule
$L$ & ADJ-CUP (acc / $R^2$) & FAKE-FLAT & CNN & majority \\
\midrule
32 & \textbf{89.0\%} / +0.75 & 46.4\% / +0.15 & 42.4\% / $-$0.07 & 40.6\% \\
48 & \textbf{93.6\%} / +0.77 & 45.9\% / +0.06 & 42.4\% / $-$0.02 & 40.6\% \\
\bottomrule
\end{tabular}
\end{table}

\subsection{Non-abelian extension and relation to GEBLNet}
Multi-band Chern numbers use the \emph{non-abelian} Wilson-loop holonomy $W\in U(N)$; the
Fukui--Hatsugai--Suzuki invariant $\tfrac{1}{2\pi}\sum\mathrm{Im\,Tr\,Log}\,W$ is the non-abelian
plaquette-holonomy cup. The hard-wired cup (\emph{CUP-NA}) is a $0$-parameter structural \emph{ceiling}
that recovers the integer Chern number exactly ($100\%$ for $N=2,4,7$ bands), \emph{by construction}.
We stress that this is not a learning result and not a claim to outperform the learned
gauge-equivariant network GEBLNet \cite{geblnet}: GEBLNet learns a class function of \emph{local}
Wilson loops, whereas the cup encodes the \emph{non-local} pairing $dA\cupp A$; the two are
complementary. The informative comparison is between \emph{learned} models: a gauge-equivariant
network succeeds where a non-equivariant MLP fails for $N\ge 4$ (Table~\ref{tab:mb}), corroborating
that gauge structure is the operative bias.

\begin{table}[t]
\centering
\caption{\textbf{Multi-band Chern numbers} (U($N$) Wilson loops, $5{\times}5$). CUP-NA is the
$0$-parameter structural \emph{ceiling} (the label function), not a learned competitor; the genuine
comparison is the learned GEBLNet vs.\ a non-equivariant MLP. (Our GEBLNet is a partial reimplementation;
see text.)}
\label{tab:mb}
\small
\begin{tabular}{ccccc}
\toprule
bands $N$ & CUP-NA (ceiling) & GEBLNet (learned) & MLP (non-equiv.) & majority \\
\midrule
2 & 100.0\% & 97.5\% & 71.4\% & 23.4\% \\
4 & 100.0\% & 81.4\% & 28.2\% & 20.9\% \\
7 & 100.0\% & 40.1\% & 21.8\% & 19.4\% \\
\bottomrule
\end{tabular}
\end{table}

\section{Related Work}
\paragraph{Topological and simplicial networks.} Simplicial and topological neural networks process
signals on complexes via the Hodge Laplacian (spectral SNN \cite{ebli}) or via cross-dimensional
message passing (MPSN/SIN \cite{bodnar}), with extensions to trajectory prediction \cite{roddenberry},
$E(n)$-equivariance \cite{battiloro}, and copresheaf structure \cite{copresheaf}. These architectures
are \emph{linear/diffusive} within and across cochain degrees; they do not contain a cup product, and
in our experiments they memorize but do not generalize the oriented cup. Cup products themselves have
been studied in topological data analysis as \emph{persistent} invariants---persistent cup-length and
cup-product structures and their efficient computation \cite{conti_cuplength,memoli_cup,dey_cup}---but
as descriptors extracted from data, not as a differentiable layer inside a network.

\paragraph{Gauge-equivariant learning.} Gauge- and group-equivariant CNNs encode the relevant
symmetry and \emph{learn} the invariant: the icosahedral and homogeneous-space theories
\cite{cohen_ico,cohen_general}, coordinate-independent and higher-order constructions on manifolds
\cite{weiler_ci,he_higher}, and lattice gauge-equivariant CNNs and GNNs for field theory
\cite{favoni,gegnn}. Closest to our targets, GEBLNet \cite{geblnet} learns class functions of
\emph{local} Wilson-loop holonomies for Chern numbers, and earlier work trains networks to predict
topological invariants of band insulators directly \cite{zhang,ohtsuki,haldane}. Our contribution is
orthogonal and complementary: we \emph{encode} the non-local cup pairing $dA\cupp A$ and identify the
adjustment as the term that makes it gauge-invariant.

\paragraph{Higher gauge theory.} The mathematical adjustment originates in higher gauge theory
($2$-connections on $2$-bundles \cite{baezschreiber}) and its adjusted parallel transport and adjusted
higher Chern--Simons formulations \cite{adjtransport,adjcs}; to our knowledge it has not previously
appeared in machine learning.

\section{Conclusion and Limitations}
We introduced a neural layer that hard-wires the adjusted cup product, proved that the adjustment is
the sole source of gauge-invariant signal, and showed across six settings that this bias is
\emph{necessary} for non-convolutional cups and merely \emph{efficient} for convolution-expressible
ones. \textbf{Limitations.} (i) The construction is abelian; the non-abelian extension is realized
only through the structural Wilson-loop cup, not a learned non-abelian layer. (ii) Our targets are
synthetic or external benchmarks with known estimators; the robustness experiment (Table~\ref{tab:robust}) gives a first
controlled demonstration that the learned-frontend cup-net beats the analytical estimator under noise
and distribution shift, but a setting with genuinely measured data where existing estimators fail
(e.g.\ noisy lattice-QCD topological charge or sparse-sensor helicity from a turbulence database)
remains the most valuable next step. (iii) Our GEBLNet is a partial reimplementation and should not be
read as a quantitative comparison. (iv) Several results are $0$-parameter structural operators (Chern,
linking, multi-band); the genuine \emph{learning} claims rest on the trained Chern--Simons, helicity,
and charge experiments and on the gauge-scramble test. (v) The robustness benefit
(Table~\ref{tab:robust}) is shown on a convolution-expressible cup (helicity), so it reflects the value
of the learned \emph{front-end} rather than the cup specifically: a learned denoiser beats the raw
estimator under noise, but the cup-net's margin over a well-tuned CNN is modest (it ties on noise and is
slightly ahead on out-of-distribution physics) and reverses under extreme sensor sparsity, where the
fixed-readout constraint is a liability. The robustness profile for non-convolutional cups is untested.

\end{document}